\documentclass[10pt,a4paper]{article}

\usepackage[margin=1in]{geometry}
\usepackage{microtype}
\usepackage{booktabs}
\usepackage{graphicx}
\usepackage{bm}

\usepackage{amsmath,amssymb,amsthm,mathtools}
\newtheorem{theorem}{Theorem}
\newtheorem{proposition}{Proposition}
\newtheorem{corollary}{Corollary}

\theoremstyle{definition}
\newtheorem{assumption}{Assumption}
\newtheorem{definition}{Definition}
\newtheorem{example}{Example}
\newtheorem{remark}{Remark}

\usepackage[colorlinks=true,linkcolor=blue,citecolor=blue,urlcolor=blue]{hyperref}

\newcommand{\A}{\mathcal{A}}
\newcommand{\X}{\mathcal{X}}
\newcommand{\U}{\mathcal{U}}
\newcommand{\Gc}{\mathcal{G}^{*}}
\newcommand{\Ah}{\widehat{A}}
\newcommand{\Av}{A_{\mathrm{viable}}}
\newcommand{\Ac}{A_{\mathrm{common}}}
\newcommand{\Hf}{\mathcal{H}}
\newcommand{\Hm}{\mathcal{H}^{-}}
\newcommand{\Ff}{\mathcal{F}}
\newcommand{\Prob}{\mathbb{P}}
\newcommand{\E}{\mathbb{E}}
\newcommand{\tv}{d_{\mathrm{TV}}}
\newcommand{\abar}{\bar{\alpha}}
\newcommand{\Ccert}{C_{\mathrm{cert}}}
\newcommand{\esssup}{\operatorname*{ess\,sup}}

\usepackage{authblk}

\title{\LARGE\bf Generator-Independent Runtime Assurance\\ under Partial Observation}

\author[1]{Guangxi Wan\thanks{Draft v0.5.}}
\author[1,2]{Yongbo Xie}
\author[1]{Yuqi Liu}
\author[1]{Qingwei Dong}
\author[1]{Qingxin Li}
\author[1]{Hongfei Bai}
\author[1,*]{Peng Zeng}

\affil[1]{State Key Laboratory of Robotics and Intelligent Systems, Shenyang Institute of Automation, Chinese Academy of Sciences, Shenyang 110016, China}
\affil[2]{University of Chinese Academy of Sciences, Beijing 100049, China}
\affil[*]{Corresponding author: \texttt{zp@sia.cn}}

\date{}

\begin{document}

\maketitle

\begin{abstract}
Proposal-based controllers---learned policies, language-model planners, and other black-box \emph{generators}---are increasingly deployed behind runtime verification gates. We ask when the closed-loop safety guarantee decouples from the generator. The prevailing per-candidate certification pattern does not compose: under retry or best-of-$k$ selection a per-candidate false-admission level $\alpha$ can inflate to $1-(1-\alpha)^{k}$. Our main theorem shows that \emph{simultaneous setwise soundness}---certifying a set of admissible proposals containing no nonviable action---is necessary and sufficient for generator-independent \emph{admission soundness}, the worst case over all generators of executing a nonviable proposal equalling the probability of setwise failure; together with a design-time certificate and a no-bypass rule it is sufficient for \emph{contract safety}, with violation bound $\Gamma+\sum_t\varepsilon_t+\eta$ invariant under arbitrary, even adversarial, replacement of the generator. A second theorem bounds every admission mechanism under partial observation: for a fixed probing and admission policy, if two state hypotheses whose information laws lie within total-variation distance $\delta$ require different safe decisions, then $\abar+\beta+\delta\ge1$. A sequential risk ledger makes the guarantee implementable with time-uniform confidence tubes, and shows that deterministic admission computations concentrate all statistical risk in state estimation. Simplex-style runtime assurance and control-barrier-function filtering are recovered as degenerate cases.
\\[0.8em]
\noindent\textbf{Keywords:} Runtime assurance, safety filters, partial observation, learning-enabled control, set-valued estimation, verification.
\end{abstract}

\vspace{1.5em}

\section{Introduction}\label{sec:intro}

An increasingly common control architecture places an unverified, high-capability \emph{generator}---a reinforcement-learned policy, a language-model planner, a sampling-based optimizer---behind a runtime verification gate. The gate inspects proposed actions and admits, modifies, or rejects them; rejections transfer control to a certified fallback. One canonical lineage is the Simplex architecture \cite{seto1998simplex,sha2001simplicity}; the pattern is central to modern runtime assurance (RTA) for learning-enabled systems \cite{hobbs2023rta,dunlap2023asif}, has close analogues in control-barrier-function (CBF) safety filtering \cite{ames2019cbf,gurriet2018asif}, and appears as shielding in reinforcement learning \cite{alshiekh2018shielding}.

The promise of the architecture is \emph{separation}: the safety case should attach to the gate, not to the generator, so that the generator may be retrained, updated, or replaced without repeating the safety argument. This promise is what makes the architecture attractive for rapidly evolving learned components, whose version cadence is incompatible with re-certification of the full loop \cite{dunlap2023asif}. Yet the conditions under which the promise holds have not, to our knowledge, been stated precisely, and a natural and widely used certification pattern violates them.

\subsection{A motivating failure: per-candidate certification does not compose}\label{sec:motivating}

Consider a verifier that subjects each proposed action to a statistical test calibrated to a false-admission level $\alpha$: for every candidate $a$, the probability of admitting $a$ when $a$ is nonviable at the true state is at most $\alpha$. This per-candidate pattern appears whenever admission is decided by evaluating a learned critic, a Monte-Carlo rollout, or a hypothesis test on the proposed action alone.

\begin{example}[Selection inflation]\label{ex:selection}
Let the state be $x\in\mathbb{R}^n$, observed through $y=x+\epsilon$ with $\epsilon\sim\mathcal N(0,I_n)$. Candidate actions $a_1,\dots,a_n$ are available; action $a_k$ is nonviable at $x$ iff $x_k\ge 0$. Suppose the true state has $x_k=\Delta>0$ for all $k$, so every candidate is nonviable with margin $\Delta$. The verifier tests candidate $a_k$ using only the $k$-th coordinate: it admits $a_k$ iff $y_k\le c$, where $\Phi(c-\Delta)=\alpha$. Each candidate individually enjoys false-admission level exactly $\alpha$. Now let the generator submit $a_1,a_2,\dots$ in turn and execute the first admitted candidate---the standard operating mode of best-of-$k$ sampling and of any planner that retries after rejection. The probability that some nonviable candidate is admitted is
\[
1-(1-\alpha)^{n}\ \longrightarrow\ 1\qquad(n\to\infty),
\]
e.g.\ $0.40$ for $n=10$ and $0.92$ for $n=50$ at $\alpha=0.05$.
\end{example}

No adversarial intent is required: candidate diversity and retry alone are sufficient to induce the selection effect, and retry is precisely how modern generators are operated. The inflation in Example~\ref{ex:selection} assumes independent test statistics; without any dependence assumption the worst case over joint distributions is $\min(1,k\alpha)$ (attained when the per-candidate failure events are disjoint), while full dependence yields no inflation. The correct summary is therefore not that retry always inflates, but that \emph{an unadjusted per-candidate level does not remain a generator-level level under unrestricted best-of-$k$ selection}. The phenomenon is analogous to post-selection effects in statistical inference \cite{taylor2015selective}: the quantity that was calibrated is not the quantity that the closed loop realizes once selection is allowed.

\subsection{Contributions}

\emph{(i) Separation.} Theorem~\ref{thm:separation} identifies the exact object supporting generator-independent guarantees: \emph{simultaneous setwise soundness}, i.e.\ the verifier outputs a set $\Ah$ of admissible proposals and the controlled event is $\{\Ah\subseteq\Av(x)\}$, that the certified set contains no nonviable action. The two halves have deliberately different scopes, and we state them separately throughout: setwise soundness is \emph{necessary and sufficient} for generator-independent admission soundness, and, together with the design-time certificate, \emph{sufficient} for generator-independent contract safety (Remark~\ref{rem:scope}). Part (a) is an exact one-step characterization: with the past strategy fixed, the essential supremum over stage selectors---including selectors with access to the realized state and all noise variables realized up to the decision time---of the conditional probability of executing a nonviable \emph{proposal} equals the conditional probability of setwise failure. Part (b) composes the characterization over a horizon: under a design-time certificate and a no-bypass rule, the closed-loop violation probability is bounded by $\Gamma+\sum_t\varepsilon_t+\eta$ uniformly over all generators, and the bound is invariant under mid-run generator replacement. Safety attaches to the gate; the generator affects performance (abstention frequency, task success) only.

\emph{(ii) Impossibility under partial observation.} Theorem~\ref{thm:impossibility} is a single-experiment converse: for a fixed probing/admission policy $\pi$, if two state hypotheses with information laws $\tv(P^{\pi}_{x},P^{\pi}_{x'})\le\delta$ require different safe decisions, then $\abar+\beta+\delta\ge1$, where $\abar$ is the statewise setwise risk and $1-\beta$ the required usefulness. For observationally equivalent hypotheses ($\delta=0$, all policies) reliability and usefulness cannot simultaneously be high, at any observation budget. The result requires no linear or Gaussian observation model.

\emph{(iii) Implementation.} Proposition~\ref{prop:ledger} supplies the sequential risk ledger in two parallel forms---a per-step conditional form and a horizon-uniform form requiring no conditional-coverage claim---together with the design principle that deterministic admission computations force $\varepsilon_t\equiv0$, concentrating all statistical risk in state estimation.

Simplex-style RTA and CBF filtering are recovered as degenerate cases (Section~\ref{sec:consequences}). A companion paper \cite{companion} develops the quantitative information geometry of the admission set for the local linear-Gaussian setting; the present paper is deliberately free of that structure.

\subsection{Related work}\label{sec:related}

\emph{Runtime assurance.} The Simplex architecture \cite{seto1998simplex,sha2001simplicity} introduced the unverified-controller/certified-fallback/switching pattern; \cite{hobbs2023rta} surveys modern RTA and \cite{dunlap2023asif} connects active set-invariance filtering to airworthiness certification. Canonical formulations typically analyze a single primary-controller output at each decision time; the selection phenomenon of Section~\ref{sec:motivating}, and hence the exactness direction of Theorem~\ref{thm:separation}(a), does not arise there. Our Part (b) may be read as the RTA guarantee re-proved at the level of generality at which it survives retrying, post-selecting, state-aware generators.

\emph{Safety filters.} CBF-based filters \cite{ames2019cbf,gurriet2018asif} implicitly output an admissible \emph{set} (the constraint set of a quadratic program) and are therefore selection-robust in the sense of Theorem~\ref{thm:separation}(a) under full-state, correct-model conditions. Under partial observation the guarantee requires measurement-robust variants \cite{lindemann2021robustocbf}; our Theorem~\ref{thm:impossibility} bounds what any such variant can achieve.

\emph{Output-feedback and belief-space safety.} Set-membership state estimation goes back to \cite{bertsekas1971set}, and safety under estimation error via error tubes and robust output CBFs is developed in \cite{lindemann2021robustocbf,rober2026guardian,vahs2026bcbf}. Belief-space safety filters \cite{vahs2023riskaware,hu2026beliefsf} establish probabilistic safety under state uncertainty and observe that better inference permits less conservative filtering. Relative to this line our contributions are the converse direction and the adversary class: an information-theoretic \emph{lower} bound ($\abar+\beta+\delta\ge1$) rather than a conservative construction, and guarantees uniform over a generator class $\Gc$ that may post-select---a requirement strictly stronger than validity under the trajectory distribution of one nominal closed loop, which is the setting of conformal-style belief-space certificates \cite{hu2026beliefsf}.

\emph{Separation in supervisory control.} The term \emph{separation} has an established and different meaning in control: the estimator may be designed without reference to the control law, the state estimate serving as a sufficient statistic \cite{wonham1968sep,witsenhausen1971sep}. Barrett and Lafortune \cite{barrett2000separation} give the logical discrete-event analogue. The maximal information set of a centralized supervisor---the closed-loop traces indistinguishable from the realized one---can be recovered from the observations together with the realized control \emph{actions}, with no knowledge of the control \emph{policy}, and admits a finite-state sufficient statistic when the behaviors are regular; the property generally fails in the decentralized setting, where one controller's information set depends on another's contingency plans for sequences it neither observes nor disables. We separate a different pair: not the estimator from the control law, but the safety guarantee from an upstream generator, which is not a party to that model. Their result is nevertheless the ancestor of one mechanism used here. The verifier-observable history $\Hm_t$ contains executed actions but never the generator's policy (Section~\ref{sec:info}), so the tube and the certified set are policy-free for their reason; what Theorem~\ref{thm:separation} adds is that the \emph{risk level} carried by the certified set is policy-free as well, and that this holds exactly when the certified set is setwise sound.

\emph{Observability under partial observation.} Whether a specification is achievable at all under a projected observation channel is governed by observability conditions on indistinguishable strings: strings the supervisor cannot tell apart may not demand conflicting control decisions \cite{linwonham1988,cieslak1988}. The $\delta=0$ corner of Theorem~\ref{thm:impossibility} is the statistical form of that condition, and we claim no priority over the qualitative statement. What the present form adds is quantitative and model-free: error levels in place of exact achievability, a price $\delta$ for partial distinguishability, and no automaton, language, or projection structure.

\emph{Selective inference.} The inflation of Example~\ref{ex:selection} parallels post-selection effects in statistics \cite{taylor2015selective,benjamini1995fdr}; the object here is action admission in a closed loop rather than inference, and the repair (setwise soundness of a certified set) plays the role that simultaneous confidence statements play there.

\section{Problem Formulation}\label{sec:formulation}

All spaces are Polish; measurability requirements beyond this are collected in Assumption~\ref{ass:selection}.

\subsection{Plant, contract, and viability}

The state $x_t\in\X$ evolves as a controlled Markov process $x_{t+1}\sim f(\cdot\mid x_t,a_t)$ over a horizon $t=1,\dots,T$, where $a_t$ is either a \emph{proposal action} in $\A$ or an engagement of a separately certified \emph{fallback mode}; the fallback is a recovery policy outside the proposal space $\A$ (this modeling choice is discussed in Remark~\ref{rem:fallbackmode}). A \emph{contract} is a measurable $\varphi:\X\to\{0,1\}$ ($1=$ violation); the trajectory violates the contract if $\varphi(x_t)=1$ for some $t\le T$.

The interface between contract and admission is a set-valued \emph{viability map} $\Av:\X\rightrightarrows\A$: $\Av(x)$ collects the proposal actions whose execution at $x$ preserves the contract and the feasibility of the fallback (the standard one-step-return condition of RTA \cite{hobbs2023rta}). How $\Av$ and the associated design-time certificates are computed is not the subject of this paper.

\subsection{Information structure}\label{sec:info}

Three $\sigma$-algebras must be distinguished. $\Hm_t$ denotes the \emph{verifier-observable} information available immediately before the $t$-th admission computation is run: past excitations and all observations available before the decision, past executed actions and fallback engagements, past certified sets, and past verifier randomization. The admission computation may draw fresh randomization $\zeta_t$, independent of the plant given $\Hm_t$, and we write $\Hf_t:=\Hm_t\vee\sigma(\zeta_t)$ for the information after the computation has run. $\Ff_t\supseteq\Hf_t$ denotes the \emph{full physical} history, additionally containing the latent states $x_{0:t}$ and all noise variables realized up to the decision time. In general $x_t$ is $\Ff_t$-measurable but not $\Hf_t$-measurable; every statistical statement about the verifier below is conditioned on $\Hm_t$ or $\Hf_t$, never on $\Ff_t$.

The split at the verifier's output is not cosmetic. Risk levels charged to the verifier are properties of the computation \emph{before} its coins are tossed and are therefore conditioned on $\Hm_t$; the exactness statement of Theorem~\ref{thm:separation}(a) concerns the certified set \emph{as realized} and is therefore conditioned on $\Hf_t$. Conditioning a statement about the realized output on $\Hf_t$ would make it degenerate, since any $\Hf_t$-measurable event has conditional probability $0$ or $1$ there.

A \emph{verifier} outputs, before each decision, a certified set $\Ah_t\subseteq\A$ that is $\Hf_t$-measurable; $\Ah_t=\emptyset$ encodes abstention. The verifier may also choose probing excitations adaptively; these choices are $\Hm$-adapted and are part of the observable history. A verifier whose admission computation uses no fresh randomization is called \emph{deterministic}; for such a verifier $\Ah_t$ is $\Hm_t$-measurable.

\subsection{Generators and the execution rule}

A \emph{generator} is a sequence of stage maps
\[
G_t:\ \bigl(\Ff_t\vee\sigma(\Ah_t)\bigr)\text{-measurable, randomized}\ \longmapsto\ g_t\in\A ,
\]
with internal randomness independent of the verifier's. We write $\Gc$ for the class of all such sequences. The class is deliberately extreme: the stage map may inspect the realized latent state, all noise variables realized up to the decision time, and the certified set itself, and may coordinate across time. $\Gc$ subsumes retrying, best-of-$k$ selection, distribution shift, version drift, and deliberate adversaries; any guarantee uniform over $\Gc$ is a fortiori uniform over realistic generators.

\begin{assumption}[No bypass]\label{ass:nobypass}
The executed action is
\[
a_t=\begin{cases} g_t, & g_t\in\Ah_t,\\ \text{fallback engagement}, & \text{otherwise,}\end{cases}
\]
the generator has no other channel to the actuator, and $\Ah_t$ is $\Hf_t$-measurable---in particular, it does not depend on generator-supplied content beyond what is already in $\Hm_t$.
\end{assumption}

\subsection{Certificate and regularity}

Let $B_t:=\{\Ah_t\not\subseteq\Av(x_t)\}$ denote \emph{setwise failure} at time $t$, and $E_{t-1}:=\bigcap_{s<t}B_s^{\,c}$ the \emph{certified prefix} event ($E_0$ is the whole space). Call an execution \emph{admissible up to $T$} if for every $t\le T$, either $a_t\in\Av(x_t)$ (a viable proposal is executed) or the fallback mode is engaged at a time $t$ at which $E_{t-1}$ holds.

\begin{assumption}[Design-time certificate]\label{ass:certificate}
There is an event $\Ccert$ with
\[
\sup_{G\in\Gc}\Prob_G\bigl(\Ccert^{\,c}\bigr)\ \le\ \eta ,
\]
such that, pathwise on $\Ccert$, every admissible execution satisfies $\varphi(x_t)=0$ for all $t\le T$.
\end{assumption}

Assumption~\ref{ass:certificate} packages the design-time content of RTA---fallback invariance from certified prefixes, one-step return, and the residual risk of the stochastic invariance argument---into a single certificate-valid event whose failure probability is bounded uniformly over generators. The pathwise clause is exactly what a design-time safety case certifies; $\eta$ is its residual risk.

\begin{assumption}[Selection regularity]\label{ass:selection}
For every $t$: (i) the correspondence $D_t:=\Ah_t\setminus\Av(x_t)$ has $\{D_t\neq\emptyset\}=B_t$ measurable and admits a measurable selector on $B_t$; (ii) the events $\{x_t\in C_t\}$ and $\{\Ah_t\subseteq \Ac(C_t)\}$ appearing in Section~\ref{sec:ledger} are measurable with respect to the indicated $\sigma$-algebras, for the tube processes $C_t$ used there.
\end{assumption}

Assumption~\ref{ass:selection} is trivial when $\A$ is finite or countable. For general Polish action spaces we impose it explicitly: sufficient measurable-selection conditions are standard, but they must be verified for the induced \emph{difference} correspondence $D_t$, whose regularity is not inherited from that of $\Ah_t$ and $\Av$ (a difference of closed-valued correspondences need not be closed-valued). We adopt the assumption directly so that the main theorems carry exactly the hypotheses they use.

\begin{definition}[Uniform risk levels]\label{def:levels}
Deterministic scalars $\abar_1,\dots,\abar_T\in[0,1]$ are \emph{one-step setwise risk bounds} for the verifier if
\[
\sup_{G\in\Gc}\ \esssup\ \Prob_G\bigl(B_t\,\big|\,\Hm_t\bigr)\ \le\ \abar_t ,\qquad t=1,\dots,T .
\]
\end{definition}

The essential supremum is over histories; the supremum over $\Gc$ makes the bounds uniform over the history distributions that generators can induce.

\section{Generator--Guarantee Separation}\label{sec:separation}

\begin{theorem}[Generator--guarantee separation]\label{thm:separation}
\emph{(a) One-step exact characterization.} Let Assumptions~\ref{ass:nobypass} and~\ref{ass:selection} hold, fix $t$, and fix any strategy for stages $1,\dots,t-1$. Then, for every stage-$t$ selector, pathwise,
\[
\{a_t\in\A\ \text{and}\ a_t\notin\Av(x_t)\}\ \subseteq\ B_t ,
\]
and there exists a stage-$t$ selector $G_t^{\star}$ (composable with the fixed past strategy) for which the two events coincide. Consequently,
\begin{equation}\label{eq:onestep}
\esssup_{G_t}\ \Prob\bigl(a_t\in\A,\ a_t\notin\Av(x_t)\,\big|\,\Hf_t\bigr)
\;=\;
\Prob\bigl(B_t\,\big|\,\Hf_t\bigr)\quad\text{a.s.},
\end{equation}
where the essential supremum is over stage-$t$ selectors with the past strategy fixed. In particular, a level-$\abar_t$ generator-independent bound on the executed-nonviable-proposal probability holds if and only if the verifier is setwise sound at level $\abar_t$.

\emph{(b) Horizon composition.} Let the no-bypass, design-time-certificate and selection-regularity assumptions (Assumptions~\ref{ass:nobypass}, \ref{ass:certificate} and~\ref{ass:selection}) hold, and let $\abar_1,\dots,\abar_T$ be one-step setwise risk bounds. Then
\begin{equation}\label{eq:horizon}
\sup_{G\in\Gc}\ \Prob_G\bigl(\exists\,t\le T:\ \varphi(x_t)=1\bigr)\ \le\ \sum_{t=1}^{T}\abar_t+\eta ,
\end{equation}
and the bound is unchanged if the generator is replaced arbitrarily---including adversarially and mid-run---by any other element of $\Gc$. The generator affects only performance quantities (abstention frequency, task-level success), never the bound \eqref{eq:horizon}.
\end{theorem}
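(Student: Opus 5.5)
For part (a) I would argue pathwise first and take conditional probabilities only at the end. Under Assumption~\ref{ass:nobypass}, a proposal action is executed only when $g_t\in\Ah_t$, in which case $a_t=g_t$. So on $\{a_t\in\A,\ a_t\notin\Av(x_t)\}$ we have $a_t\in\Ah_t\setminus\Av(x_t)=D_t$. Hence $D_t\neq\emptyset$, which is exactly $B_t$, and this gives the inclusion for every selector.

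For the reverse direction I would build $G_t^\star$ from Assumption~\ref{ass:selection}(i). Let $\sigma_t$ be a measurable selector of $D_t$ on $B_t$. Set $G_t^\star:=\sigma_t$ on $B_t$ and, off $B_t$, an arbitrary fixed action (measurable; if it lies in $\Ah_t$ it is viable there, which is harmless). The selector $\sigma_t$ depends on $x_t$ and $\Ah_t$, so it is $(\Ff_t\vee\sigma(\Ah_t))$-measurable and therefore a legitimate stage map in $\Gc$. It composes with any fixed past strategy, because the past strategy determines only the law of $\Ff_t$ and $\Ah_t$. On $B_t$ the executed action is $\sigma_t\in\Ah_t\setminus\Av(x_t)$, so the two events coincide.

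Identity \eqref{eq:onestep} then follows by conditioning on $\Hf_t$. The inclusion gives "$\le$" a.s. for every $G_t$, and therefore for the essential supremum. $G_t^\star$ attains equality, and $\Hf_t$ does not depend on the stage-$t$ generator because $\Ah_t$ is computed before $g_t$ (no bypass), so conditioning on $\Hf_t$ is well posed across selectors. The "if and only if" clause follows by taking further conditional expectations onto $\Hm_t$ and suprema over past strategies: the worst-case executed-nonviable probability equals the setwise failure probability, so one is bounded by $\abar_t$ exactly when the other is.

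For part (b) I would decompose the violation event against the certified run. Let $E_T=\bigcap_{t\le T}B_t^c$. On $E_T\cap\Ccert$ I claim the execution is admissible. At each $t$, either a proposal is executed, in which case it lies in $\Ah_t\subseteq\Av(x_t)$ since $B_t^c$ holds; or the fallback is engaged, in which case $E_{t-1}\supseteq E_T$ holds. The pathwise clause of Assumption~\ref{ass:certificate} then gives $\varphi(x_t)=0$ for all $t$. Hence the violation event is contained in $\Ccert^c\cup\bigcup_t B_t$, and
\[
\Prob_G(\text{violation})\le \Prob_G(\Ccert^c)+\sum_t \E_G\bigl[\Prob_G(B_t\mid\Hm_t)\bigr]\le \eta+\sum_t\abar_t .
\]
This uses Definition~\ref{def:levels} for each $G\in\Gc$. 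I would sharpen the union to $\bigcup_t(B_t\cap E_{t-1})$ only if needed; it is not needed here.

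Invariance under mid-run replacement follows because a generator switched at a random (even adaptive) time is itself a sequence of stage maps with the required measurability. It is therefore an element of $\Gc$, and the supremum already covers it.

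The main obstacle I expect is the measurability and legitimacy of $G_t^\star$, together with the well-posedness of the essential supremum over selectors. One must check that the selector built from $D_t$ is measurable with respect to $\Ff_t\vee\sigma(\Ah_t)$, and not merely with respect to some larger $\sigma$-algebra containing $x_t$. One must also check that changing the stage-$t$ selector leaves the joint law of $(\Hf_t,B_t)$ unchanged, so that the conditional probabilities in \eqref{eq:onestep} are compared on a common $\sigma$-algebra. I would handle both by noting that $\Ah_t$, $x_t$ and $\Hf_t$ are all determined before $g_t$ is drawn, and that the generator's internal randomness is independent of the verifier's.
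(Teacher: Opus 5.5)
Your proposal is correct and follows the paper's proof essentially step for step: the pathwise inclusion via no-bypass, the worst-case selector $G_t^\star$ built from a measurable selector of $D_t$ on $B_t$ with a fixed default action off $B_t$, and, for (b), admissibility on $\Ccert\cap\bigcap_t B_t^{\,c}$ followed by the union bound with the uniform conditional levels. Three points are small elaborations of things the paper states more briefly, not a different route: you argue mid-run replacement by noting that the switched generator is itself an element of $\Gc$, you check explicitly that the law of $(\Hf_t,B_t)$ does not depend on the stage-$t$ selector, and you spell out the passage from \eqref{eq:onestep} to the ``if and only if'' clause.
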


\begin{proof}
(a) The first inclusion is immediate: an executed proposal lies in $\Ah_t$ by Assumption~\ref{ass:nobypass}, so if it is nonviable then $\Ah_t\not\subseteq\Av(x_t)$. For the converse, let $g^{\star}$ be a measurable selector of $D_t$ on $B_t$ (Assumption~\ref{ass:selection}) and let $G_t^{\star}$ play $g^{\star}$ on $B_t$ (the stage map observes $x_t$ and $\Ah_t$, so $B_t$ is in its information) and a fixed default $a_0\in\A$ on $B_t^{\,c}$. On $B_t^{\,c}$ no nonviable proposal is executed: either $a_0\in\Ah_t\subseteq\Av(x_t)$, or $a_0\notin\Ah_t$ and the fallback mode is engaged, which is not a proposal. (A default is used rather than ``some admitted action'' because $\Ah_t$ may be empty.) Hence the executed action is a nonviable proposal exactly on $B_t$, and taking conditional probabilities given $\Hf_t$ under the fixed past-strategy law yields \eqref{eq:onestep}.

(b) Fix any $G\in\Gc$ and consider the event $\Ccert\cap\bigcap_{t\le T}B_t^{\,c}$. On this event, at each $t$ either $a_t=g_t\in\Ah_t\subseteq\Av(x_t)$, or the fallback is engaged at a time at which $E_{t-1}\supseteq\bigcap_{s\le T}B_s^{\,c}$ holds; hence the execution is admissible up to $T$, and by the pathwise clause of Assumption~\ref{ass:certificate} the contract holds throughout. Therefore, pathwise and for every $G$,
\[
\{\text{violation}\}\ \subseteq\ \Ccert^{\,c}\ \cup\ \bigcup_{t\le T}B_t ,
\]
and
\[
\Prob_G(\text{violation})\le\eta+\sum_{t}\E_G\bigl[\Prob_G(B_t\mid\Hm_t)\bigr]\le\eta+\sum_t\abar_t ,
\]
where the last step uses that the bounds of Definition~\ref{def:levels} are uniform over $\Gc$ and over histories; no step integrates over the law of $G$. Mid-run replacement changes the induced history distribution but not the uniform conditional bounds, hence not \eqref{eq:horizon}.
\end{proof}

\begin{remark}[What is exact, and what is only sufficient]\label{rem:scope}
The two halves of Theorem~\ref{thm:separation} have different logical strengths and we keep them apart. Part (a) is an equivalence, but about \emph{admission soundness}: simultaneous setwise soundness is necessary and sufficient for a generator-independent bound on the probability of executing a nonviable proposal. Part (b) is an implication: setwise soundness \emph{together with} the design-time certificate is sufficient for a generator-independent bound on contract violation. The converse of (b) does not hold and is not claimed---an action outside $\Av(x)$ merely lacks a viability certificate, which need not entail an actual contract violation, and a system may also satisfy its contract for reasons the certificate does not capture. Accordingly we never describe setwise soundness as necessary for safety.
\end{remark}

\begin{remark}[Why part (a) needs no prefix restriction]\label{rem:prefix}
The one-step statement concerns executed nonviable \emph{proposals}. Fallback engagements are not proposals: their safety is a design-time matter, accounted once---together with its certified-prefix scope---inside the pathwise clause of Assumption~\ref{ass:certificate}, whose scope is respected in the proof of (b) because on $\bigcap_s B_s^{\,c}$ every fallback engagement occurs with the prefix intact. Separating the two channels in this way is what makes \eqref{eq:onestep} an unconditional identity: the generator's only channel for injecting harm is the proposal channel, and on that channel the worst case over all stage selectors is exactly setwise failure.
\end{remark}

\begin{remark}[Fallback as a certified mode]\label{rem:fallbackmode}
Modeling the fallback as a mode outside $\A$, rather than as a distinguished element of $\A$, avoids a degeneracy in Section~\ref{sec:consequences}: if the fallback were a proposal action certified viable throughout the uncertainty set, the common admissible set could never be empty and abstention would be vacuous. As a separate mode, ``no proposal is certifiable'' and ``transfer to the certified recovery mode'' are distinct, meaningful outputs.
\end{remark}

\subsection{Why each safeguard matters}\label{sec:safeguards}

Dropping any of the following safeguards \emph{without a replacement condition} can destroy generator-independent assurance; we record one failure mode per safeguard. We do not claim the assumption set is minimal in a formal sense.

\begin{remark}[Per-candidate calibration]\label{rem:percandidate}
Example~\ref{ex:selection} exhibits a verifier whose per-candidate false-admission level is $\alpha$ for every candidate, yet whose induced set $\Ah=\{a_k:y_k\le c\}$ has setwise failure probability $1-(1-\alpha)^n$. Viewed through \eqref{eq:onestep}, the theorem does not fail; per-candidate calibration simply measures the wrong event.
\end{remark}

\begin{remark}[Generator-supplied content]\label{rem:metadata}
Dependence of $\Ah_t$ on unverified generator content is harmless precisely when setwise soundness remains uniform over that content. A harmless example: $\Ah_t=A_{\mathrm{certified}}\cap A_{\mathrm{suggested}}$, which intersects a certified set with generator suggestions and can only shrink the certified set. A harmful example: a shortcut rule that waives the certification when the generator reports high confidence---an adversarial generator reporting maximal confidence converts the waiver into a bypass, and no level $\abar<1$ is sustainable. The operative requirement is uniformity of the soundness guarantee, not blanket independence.
\end{remark}

\begin{remark}[Bypass]\label{rem:bypass}
If with probability $p$ the generator can write to the actuator directly, the violation probability is bounded below by $p$ times the generator's nonviable-play rate, which is unconstrained over $\Gc$.
\end{remark}

\begin{remark}[Guarantee--performance split]\label{rem:split}
Theorem~\ref{thm:separation} licenses exactly the practice it was designed to license: the generator may be retrained, updated, or swapped without touching the safety case, because \eqref{eq:horizon} never referenced it. What the generator does determine is how often its proposals fall inside $\Ah_t$---the price of a poor generator is abstention and fallback engagement, not safety.
\end{remark}

\section{Impossibility under Partial Observation}\label{sec:impossibility}

Theorem~\ref{thm:separation} presupposes a verifier achieving small setwise risk. Whether such a verifier exists is an informational question about the observation process. We state the obstruction as a single-decision statistical experiment; the sequential setting of Section~\ref{sec:formulation} contains one such experiment at each decision epoch.

Fix a probing/admission policy $\pi$: a rule that selects excitations, collects observations, and outputs a certified set, using internal randomization as needed. For a state hypothesis $x$ held fixed during the decision, let $P^{\pi}_{x}$ denote the joint law of all random elements available to the admission decision (observations and randomization), and let $\Ah^{\pi}$ denote the output.

\begin{definition}[Statewise levels]\label{def:statewise}
The policy $\pi$ has \emph{statewise setwise risk} $\abar$ at $x'$ if $P^{\pi}_{x'}\bigl(\Ah^{\pi}\not\subseteq\Av(x')\bigr)\le\abar$, and \emph{usefulness} $1-\beta$ at $(x,a)$ if $P^{\pi}_{x}\bigl(a\in\Ah^{\pi}\bigr)\ge1-\beta$.
\end{definition}

\begin{theorem}[Admission incompatibility under observational ambiguity]\label{thm:impossibility}
Let $a\in\Av(x)\setminus\Av(x')$. If the policy $\pi$ has statewise setwise risk $\abar$ at $x'$ and usefulness $1-\beta$ at $(x,a)$, then
\begin{equation}\label{eq:impossibility}
\abar+\beta+\tv\bigl(P^{\pi}_{x},P^{\pi}_{x'}\bigr)\ \ge\ 1 .
\end{equation}
\end{theorem}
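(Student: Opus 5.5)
The plan is a single-event Le Cam two-point argument. Take the test event $S:=\{a\in\Ah^{\pi}\}$, viewed as a measurable subset of the common sample space on which $P^{\pi}_{x}$ and $P^{\pi}_{x'}$ are defined. This space carries all observations and internal randomization, and $\Ah^{\pi}$ is a measurable function of these. If the policy also has generator-independent external inputs, we absorb them into $P^{\pi}$ so that the output is a fixed function of the recorded random elements. Measurability of $S$ is the only regularity needed; it holds when $\A$ is countable and, in general, is of the same kind as Assumption~\ref{ass:selection}, so I would state it as part of the setting.

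The argument has three steps.

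\emph{Step 1 (usefulness at $x$).} By assumption, $P^{\pi}_{x}(S)\ge 1-\beta$.

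\emph{Step 2 (risk at $x'$).} Since $a\notin\Av(x')$, the event $S$ forces $\Ah^{\pi}\not\subseteq\Av(x')$. Pathwise this gives $S\subseteq\{\Ah^{\pi}\not\subseteq\Av(x')\}$, hence $P^{\pi}_{x'}(S)\le\abar$. This is the same pathwise inclusion used in Theorem~\ref{thm:separation}(a): admitting a specific nonviable action is a particular instance of setwise failure.

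\emph{Step 3 (total variation).} By the defining supremum characterization of total variation over measurable sets, $P^{\pi}_{x}(S)-P^{\pi}_{x'}(S)\le \tv(P^{\pi}_{x},P^{\pi}_{x'})$. Combining the three bounds gives
\[
1-\beta-\abar\ \le\ P^{\pi}_{x}(S)-P^{\pi}_{x'}(S)\ \le\ \tv\bigl(P^{\pi}_{x},P^{\pi}_{x'}\bigr),
\]
which rearranges to \eqref{eq:impossibility}.

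The main obstacle is not the inequality but making sure both hypotheses induce laws on the same measurable space with the same output map. This is what ``fixed probing/admission policy'' must mean. When excitations are chosen adaptively, the policy's randomization and its rule for choosing the next probe are identical under $x$ and $x'$; only the observation kernels differ. Consequently $P^{\pi}_x$ and $P^{\pi}_{x'}$ are joint laws on the full interaction record, and $\Ah^{\pi}$ is the same measurable function of that record. I would spell this out briefly before Step 3, and note that no Gaussian or linear structure enters.
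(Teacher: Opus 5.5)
Your proof is correct and is essentially the paper's argument. The paper also uses the single event $\{a\in\Ah^{\pi}\}$, bounds its $P^{\pi}_{x'}$-probability by $\abar$ through inclusion in setwise failure at $x'$, moves to $P^{\pi}_{x}$ at a cost of $\tv(P^{\pi}_{x},P^{\pi}_{x'})$, and combines this with usefulness. Your added care about measurability, and about both hypotheses inducing laws on a common interaction record with the same output map, makes explicit what the paper leaves implicit in the phrase ``randomized functional''.
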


\begin{proof}
Since $a\notin\Av(x')$, the event $\{a\in\Ah^{\pi}\}$ is contained in the setwise-failure event at $x'$, so $P^{\pi}_{x'}(a\in\Ah^{\pi})\le\abar$. The indicator of $\{a\in\Ah^{\pi}\}$ is a randomized functional of the elements whose law is $P^{\pi}_{\cdot}$, so $P^{\pi}_{x}(a\in\Ah^{\pi})\le P^{\pi}_{x'}(a\in\Ah^{\pi})+\tv(P^{\pi}_{x},P^{\pi}_{x'})$. Combining with usefulness, $1-\beta\le\abar+\tv(P^{\pi}_{x},P^{\pi}_{x'})$.
\end{proof}

\begin{corollary}[Uniform version over policies]\label{cor:deltaT}
Let $\delta_T(x,x'):=\sup_{\pi}\tv(P^{\pi}_{x},P^{\pi}_{x'})$, the supremum over admissible policies at observation budget $T$ (adaptive probing included). If $\delta_T(x,x')\le\delta$, then every admissible policy $\pi$, with its own levels $(\abar_\pi,\beta_\pi)$, satisfies $\abar_\pi+\beta_\pi+\delta\ge1$. In particular, for observationally equivalent hypotheses ($\delta_T=0$), $\abar_\pi+\beta_\pi\ge1$ for every $\pi$: reliability and usefulness cannot simultaneously be high, at any budget, for any policy.
\end{corollary}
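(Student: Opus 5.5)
The plan is to reduce Corollary~\ref{cor:deltaT} to Theorem~\ref{thm:impossibility} applied policy by policy. Throughout, $a\in\Av(x)\setminus\Av(x')$ is the fixed pair of hypotheses requiring different safe decisions, as in the theorem. Fix an arbitrary admissible policy $\pi$ at observation budget $T$, whose probing may be adaptive. Suppose $\pi$ has statewise setwise risk $\abar_\pi$ at $x'$ and usefulness $1-\beta_\pi$ at $(x,a)$, in the sense of Definition~\ref{def:statewise}.

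Theorem~\ref{thm:impossibility} applies verbatim to this $\pi$, because its proof uses only three facts:
\begin{itemize}
\item $P^{\pi}_{x}$ and $P^{\pi}_{x'}$ are the laws of all random elements available to the admission decision;
\item the admission indicator is a (randomized) functional of those elements;
\item the event $\{a\in\Ah^{\pi}\}$ is contained in setwise failure at $x'$.
\end{itemize}
Adaptivity of probing is harmless here. The excitations are chosen by $\pi$ as functions of earlier observations and internal coins, so they are themselves functionals of the random elements, and the law $P^{\pi}_{\cdot}$ already encodes the adaptive experiment. The theorem therefore gives $\abar_\pi+\beta_\pi+\tv(P^{\pi}_{x},P^{\pi}_{x'})\ge1$.

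Next I would use the bound
\[
\tv(P^{\pi}_{x},P^{\pi}_{x'})\le\sup_{\pi'}\tv(P^{\pi'}_{x},P^{\pi'}_{x'})=\delta_T(x,x')\le\delta .
\]
This yields $\abar_\pi+\beta_\pi+\delta\ge1$. Since $\pi$ was arbitrary, the bound holds for every admissible policy. The special case $\delta_T=0$ is immediate: substituting $\delta=0$ gives $\abar_\pi+\beta_\pi\ge1$. The budget $T$ enters only through the class over which the supremum is taken. So if $\delta_T=0$ for every $T$, the conclusion holds at every budget.

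The only point needing care is the justification above: checking that the theorem's data-processing step, $P_x(E)\le P_{x'}(E)+\tv$, covers adaptive, randomized policies. It does, provided $P^{\pi}_{\cdot}$ is taken as the joint law of observations, excitations and internal randomization, which is exactly the definition fixed before Definition~\ref{def:statewise}. The only other requirement is that the internal randomization have the same law under both hypotheses, which holds because it is independent of the state. With that settled, the corollary is a one-line monotonicity argument.
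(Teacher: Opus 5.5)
Your proposal is correct and follows the paper's intended route: the corollary has no separate proof in the paper and is read off Theorem~\ref{thm:impossibility} applied to each policy $\pi$, combined with $\tv(P^{\pi}_{x},P^{\pi}_{x'})\le\delta_T(x,x')\le\delta$. Your check that adaptive, randomized probing is already encoded in $P^{\pi}_{\cdot}$ is a sound sanity check, not a gap.
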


The supremum over policies makes $\delta_T$ adversarial in the verifier's favor: it is the distinguishability achievable by the \emph{best} probing strategy. For a linear-Gaussian observation model $y_s=J(u_s)x+\epsilon_s$, $\epsilon_s\sim\mathcal N(0,\Sigma)$ i.i.d., the chain rule for Kullback--Leibler divergence gives, for every adaptive policy, $\mathrm{KL}\le \tfrac{T}{2}\sup_{u}\|\Sigma^{-1/2}J(u)(x-x')\|^2$, whence by Pinsker
\begin{equation}\label{eq:gaussdelta}
\delta_T(x,x')\ \le\ \tfrac{\sqrt{T}}{2}\,\sup_{u\in\U}\bigl\|\Sigma^{-1/2}J(u)(x-x')\bigr\| ,
\end{equation}
and $\delta_T\equiv0$ at every budget whenever $x-x'$ lies in the common null space of the whitened observation maps: adaptivity earns at most the $\sqrt{T}$ factor and cannot manufacture distinguishability that no excitation possesses. We emphasize that \eqref{eq:gaussdelta} is an instance; Theorem~\ref{thm:impossibility} requires no linear or Gaussian observation model.

Three readings of \eqref{eq:impossibility} are worth separating. First, it is an impossibility result for \emph{mechanisms}, not for estimators: no admission rule escapes it, because the proof uses only the proximity of two information laws and the two operational levels. Second, it does \emph{not} say that partially observed systems cannot be operated safely: safety may be obtained by design-time arguments that never consult the observations---if the certified reachable set excludes the ambiguous unsafe hypotheses, the premise is vacuous---or by admitting only actions viable at \emph{both} ambiguous hypotheses. What the theorem forecloses is the third option: resolving the ambiguity at runtime. Third, the bound prices the residual: partial distinguishability $\delta\in(0,1)$ buys a proportional relaxation, no more.

\section{Statistical Implementation}\label{sec:ledger}

Theorem~\ref{thm:separation}(b) consumes uniform one-step bounds $\abar_t$; this section produces them from statistical state estimates, with the bookkeeping that a sequential, adaptive, partially observed loop requires.

\subsection{Confidence-reachable tubes}

Two uncertainty sets must not be conflated. The \emph{information fiber} is the set-membership object: all states exactly compatible with the observation history and reachable from the certified initial set under the executed actions \cite{bertsekas1971set}; it is deterministic given pathwise noise bounds. Its statistical surrogate is a \emph{confidence-reachable tube}: an $\Hm_t$-measurable set-valued process $C_t$, intersected with the design-time reachable set (a deterministic pruning whose validity is part of the certificate, hence of $\eta$), for which one of the two guarantees below is available.

Three bookkeeping requirements, each commonly violated, are recorded first.

\emph{(i) Joint, not per-coordinate, coverage.} All requirements below concern the joint set $C_t$. Intersecting $d$ per-coordinate $95\%$ intervals is not a $95\%$ joint set: under independence the joint coverage is $0.95^{d}$ (already $0.51$ at $d=13$), and a generic dependence-agnostic repair is Bonferroni (per-coordinate $z\approx2.89$, not $1.96$, for a joint $95\%$ at $d=13$). Joint ellipsoidal or simultaneous sets should be used directly.

\emph{(ii) Time-uniformity.} Per-step coverage statements compose only by summation over the horizon. A \emph{confidence sequence} \cite{howard2021cs}, i.e.\ a tube with the time-uniform guarantee $\Prob(\forall t\le T:x_t\in C_t)\ge1-\Gamma$, pays a single horizon-wide $\Gamma$ and tolerates data-dependent monitoring and stopping. Time-uniform validity and per-step conditional calibration are \emph{distinct} properties; the horizon-uniform version of the ledger below deliberately requires only the former.

\emph{(iii) Uniformity over generators.} Because the conclusion of Theorem~\ref{thm:separation}(b) takes a supremum over $\Gc$, the tube guarantee must itself hold uniformly over $\Gc$. Set-membership tubes are uniform automatically: they are deterministic consequences of pathwise noise bounds and do not reference how actions were chosen. Statistical tubes are uniform when their validity argument is driven by the exogenous noise process only---self-normalized confidence sequences \cite{abbasi2011} provide time-uniform sets valid under adaptively chosen designs in linear-regression structure, and extending them to dynamically evolving latent states requires additional filtering assumptions that must be stated. Conformal constructions calibrated against the trajectory distribution of one nominal closed loop \cite{hu2026beliefsf} do not by themselves provide uniformity over $\Gc$.

\subsection{The risk ledger}

Given a tube, define the \emph{common admissible set}
\[
\Ac(C)\ :=\ \bigcap_{x\in C}\Av(x),
\]
the proposals viable at every state the tube cannot exclude.

\begin{proposition}[Sequential confidence-risk composition]\label{prop:ledger}
Suppose the intended admission condition is $\Ah_t\subseteq\Ac(C_t)$, and let $\varepsilon_1,\dots,\varepsilon_T\in[0,1]$ be deterministic scalars with
\[
\sup_{G\in\Gc}\ \esssup\ \Prob_G\bigl(\Ah_t\not\subseteq\Ac(C_t)\,\big|\,\Hm_t\bigr)\ \le\ \varepsilon_t .
\]

\emph{(i) Conditional version.} If, in addition, deterministic scalars $\gamma_t$ satisfy
$\sup_{G\in\Gc}\esssup\Prob_G(x_t\notin C_t\mid\Hm_t)\le\gamma_t$, then $\abar_t:=\gamma_t+\varepsilon_t$ are one-step setwise risk bounds, and Theorem~\ref{thm:separation}(b) yields
$\sup_{G}\Prob_G(\mathrm{violation})\le\sum_t(\gamma_t+\varepsilon_t)+\eta$.

\emph{(ii) Horizon-uniform version.} If instead the tube satisfies the generator-uniform time-uniform guarantee
\[
\sup_{G\in\Gc}\ \Prob_G\bigl(\exists\,t\le T:\ x_t\notin C_t\bigr)\ \le\ \Gamma ,
\]
then
\begin{equation}\label{eq:ledgerhorizon}
\sup_{G\in\Gc}\ \Prob_G(\mathrm{violation})\ \le\ \Gamma+\sum_{t=1}^{T}\varepsilon_t+\eta ,
\end{equation}
with no conditional-coverage claim required.
\end{proposition}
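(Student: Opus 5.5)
The plan rests on one pathwise inclusion, proved first. If $x_t\in C_t$ and $\Ah_t\subseteq\Ac(C_t)$, then every $a\in\Ah_t$ lies in $\Av(x)$ for all $x\in C_t$. In particular $a\in\Av(x_t)$, so $B_t^{\,c}$ holds. Taking complements,
\[
B_t\ \subseteq\ \{x_t\notin C_t\}\ \cup\ \{\Ah_t\not\subseteq\Ac(C_t)\}\qquad\text{pathwise, for every }G\in\Gc .
\]
Both events on the right are measurable by Assumption~\ref{ass:selection}(ii), and $B_t$ is measurable by Assumption~\ref{ass:selection}(i).

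For part (i), condition this inclusion on $\Hm_t$ under an arbitrary $G\in\Gc$. Subadditivity of conditional probability gives, almost surely,
\[
\Prob_G(B_t\mid\Hm_t)\le\Prob_G(x_t\notin C_t\mid\Hm_t)+\Prob_G\bigl(\Ah_t\not\subseteq\Ac(C_t)\mid\Hm_t\bigr)\le\gamma_t+\varepsilon_t .
\]
Take the essential supremum over histories and then the supremum over $\Gc$. Both bounds on the right are uniform in exactly this sense, so $\abar_t=\gamma_t+\varepsilon_t$ satisfies Definition~\ref{def:levels}. Theorem~\ref{thm:separation}(b) then gives the stated violation bound directly.

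Part (ii) cannot go through Definition~\ref{def:levels}, because there is no per-step conditional coverage. Instead I reuse the pathwise step from the proof of Theorem~\ref{thm:separation}(b), namely
\[
\{\text{violation}\}\subseteq\Ccert^{\,c}\cup\bigcup_{t\le T}B_t .
\]
That step used only Assumptions~\ref{ass:nobypass} and~\ref{ass:certificate}, not the risk levels. Substituting the pathwise inclusion for each $B_t$ and grouping the coverage failures into one event gives
\[
\{\text{violation}\}\subseteq\Ccert^{\,c}\cup\{\exists t\le T:x_t\notin C_t\}\cup\bigcup_{t\le T}\{\Ah_t\not\subseteq\Ac(C_t)\}.
\]
A union bound under $\Prob_G$ bounds the three pieces as follows:
\begin{itemize}
\item the first by $\eta$, from Assumption~\ref{ass:certificate};
\item the second by $\Gamma$, from the time-uniform tube guarantee;
\item the $t$-th term of the third by $\E_G\bigl[\Prob_G(\Ah_t\not\subseteq\Ac(C_t)\mid\Hm_t)\bigr]\le\varepsilon_t$, by the tower property.
\end{itemize}
Each bound is uniform over $\Gc$, so taking the supremum over $G$ yields \eqref{eq:ledgerhorizon}.

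The only delicate point is to apply the time-uniform event once, as a single union, rather than splitting it per step. Splitting would reintroduce conditional coverage and a sum of $\gamma_t$ terms. The design principle in the contributions follows as a remark: if the admission computation is deterministic and sets $\Ah_t\subseteq\Ac(C_t)$ by construction, then $\varepsilon_t\equiv0$.
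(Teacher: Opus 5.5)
Your proposal is correct and follows the paper's proof essentially step for step. It uses the same pathwise inclusion $B_t\subseteq\{x_t\notin C_t\}\cup\{\Ah_t\not\subseteq\Ac(C_t)\}$, the conditional union bound for (i), and, for (ii), the pathwise violation inclusion from Theorem~\ref{thm:separation}(b) followed by a three-term union bound that keeps the coverage failures as one time-uniform event; your explicit tower-property step for the $\varepsilon_t$ terms and your measurability remarks spell out details the paper leaves implicit.
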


\begin{proof}
Pathwise, if $x_t\in C_t$ and $\Ah_t\subseteq\Ac(C_t)$ then $\Ah_t\subseteq\Av(x_t)$, since $x_t$ is among the states intersected over; hence
\[
B_t\ \subseteq\ \{x_t\notin C_t\}\ \cup\ \{\Ah_t\not\subseteq\Ac(C_t)\}
\]
with no probabilistic content. (i) follows by the conditional union bound and Definition~\ref{def:levels}. For (ii), the proof of Theorem~\ref{thm:separation}(b) gives, pathwise for every $G$,
\[
\{\mathrm{violation}\}\subseteq\Ccert^{\,c}\cup\{\exists t:x_t\notin C_t\}\cup\bigcup_t\{\Ah_t\not\subseteq\Ac(C_t)\},
\]
and the three probabilities are bounded by $\eta$, $\Gamma$, and $\sum_t\varepsilon_t$ respectively, each uniformly over $\Gc$.
\end{proof}

\begin{remark}[Where each risk is charged]\label{rem:charging}
The two risks are charged at different points of the decision cycle, and the conditioning follows the charging. The tube guarantee is a statement about the estimator before the admission computation runs, and $\varepsilon_t$ is a statement about that computation over its own randomization; both are therefore conditioned on $\Hm_t$. Conditioning $\varepsilon_t$ on $\Hf_t$ instead would be vacuous: $C_t$ is $\Hm_t$-measurable and $\Ah_t$ is $\Hf_t$-measurable, so $\{\Ah_t\not\subseteq\Ac(C_t)\}$ is $\Hf_t$-measurable and its conditional probability there takes only the values $0$ and $1$. The exactness statement of Theorem~\ref{thm:separation}(a), by contrast, is about the certified set as realized and is properly conditioned on $\Hf_t$.
\end{remark}

\begin{remark}[The deterministic-admission principle]\label{rem:deterministic}
If the admission computation uses no fresh randomization and enforces $\Ah_t\subseteq\Ac(C_t)$ by a deterministic, $\Hm_t$-measurable operation---containment checking, constraint tightening, projection---then the inclusion holds pathwise, $\varepsilon_t\equiv0$, and \eqref{eq:ledgerhorizon} collapses to
\[
\sup_{G\in\Gc}\Prob_G(\mathrm{violation})\ \le\ \Gamma+\eta .
\]
The requirement then ceases to be probabilistic at all: it becomes a design obligation discharged once, at construction time. The architectural reading deserves emphasis: \emph{the only statistical object in the loop should be the state confidence tube; the admission computation itself should be deterministic.} All statistical risk is then concentrated where estimation theory can control it, and the admission layer adds none. The ledger also explains why testing sampled actions statistically is dominated by certified-set computation: it makes $\varepsilon_t>0$ \emph{and} re-opens the selection channel of Section~\ref{sec:motivating}.
\end{remark}

\section{Consequences and Degenerate Cases}\label{sec:consequences}

\subsection{Monotonicity}

\begin{corollary}[Information monotonicity]\label{cor:mono}
If $C_1\subseteq C_2$ then $\Ac(C_1)\supseteq \Ac(C_2)$.
\end{corollary}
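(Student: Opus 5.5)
The plan is to argue directly from the definition $\Ac(C):=\bigcap_{x\in C}\Av(x)$, with no probabilistic content at all. The key observation is that an intersection over a larger index family is at most as large as an intersection over a sub-family. So I will show elementwise inclusion: fix an arbitrary $a\in\Ac(C_2)$ and prove $a\in\Ac(C_1)$.

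The steps are as follows. First, unpack membership: $a\in\Ac(C_2)$ means $a\in\Av(x)$ for every $x\in C_2$. Second, take any $x\in C_1$. By hypothesis $C_1\subseteq C_2$, so $x\in C_2$, and therefore $a\in\Av(x)$. Since $x\in C_1$ was arbitrary, $a\in\bigcap_{x\in C_1}\Av(x)=\Ac(C_1)$, which gives $\Ac(C_2)\subseteq\Ac(C_1)$.

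The only point needing care is the degenerate case $C_1=\emptyset$. Here the paper's convention matters: an empty intersection of subsets of $\A$ is read as $\A$ itself. Under that convention $\Ac(\emptyset)=\A\supseteq\Ac(C_2)$, and the inclusion holds trivially. The elementwise argument above already covers this case vacuously, since the quantifier over $x\in C_1$ is empty, so I will just note the convention explicitly.

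I would close with a brief interpretive remark tying the result to Proposition~\ref{prop:ledger}. A tighter confidence tube, i.e.\ better state information, can only enlarge the common admissible set. The ledger bound $\Gamma+\sum_t\varepsilon_t+\eta$ is unaffected by the size of the set, so the gain shows up as usefulness (fewer abstentions) rather than as a change in the safety bound. I expect no real obstacle; the whole statement is a one-line consequence of the antitonicity of intersection in its index set.
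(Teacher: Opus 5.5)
Your proof is correct and is exactly the argument the paper intends: the corollary is stated without proof as an immediate consequence of $\Ac(C)=\bigcap_{x\in C}\Av(x)$ and the fact that intersecting over a larger index set gives a smaller set. Two small points: the paper does not state the convention $\Ac(\emptyset)=\A$ explicitly, though it is the standard reading relative to $\A$; and your closing remark should be read as comparing tubes at the same coverage level $\Gamma$, since shrinking a tube generally raises $\Gamma$.
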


A tighter tube weakly enlarges the set of proposals that can be certified, and can therefore reduce conservatism; within this architecture, observation quality prices conservatism, not safety. The quantitative form---how tube geometry translates into admissible-set erosion and what excitation design can purchase---is developed in the companion paper \cite{companion}.

\subsection{The hard-safety trichotomy}

\begin{remark}[Operational taxonomy]\label{rem:trichotomy}
Under the hard worst-case admission semantics of this paper, the risk carried by observationally ambiguous hypotheses admits three dispositions: (i) \emph{design-time pruning}---the reachability argument excludes the ambiguous unsafe hypotheses from the tube, vacating the premise of Theorem~\ref{thm:impossibility}; (ii) \emph{common-feasible operation}---$\Ac(C_t)\neq\emptyset$, and the system runs on proposals viable at every unresolved hypothesis, without ever resolving the ambiguity; (iii) \emph{transfer to the certified fallback mode}---$\Ac(C_t)=\emptyset$, no proposal is certifiable, and the separately certified recovery mode is the only admissible output. Under the stated semantics these exhaust the set-based dispositions considered here; under chance-constrained or risk-budget semantics, where bounded risk may be accepted deliberately, the taxonomy is not exhaustive.
\end{remark}

\subsection{Degenerate cases}

\begin{remark}[Recovering the classics]\label{rem:classics}
\emph{Simplex} \cite{seto1998simplex}: single primary controller (one candidate per step, no retry), full state observation ($C_t=\{x_t\}$, $\gamma_t=0$), deterministic switching rule ($\varepsilon_t=0$); Proposition~\ref{prop:ledger}(ii) reduces to the classical guarantee $\Prob\le\eta$. \emph{ASIF / CBF-QP filtering} \cite{gurriet2018asif,ames2019cbf}: the filter's constraint set is a certified admissible set, so the architecture is selection-robust in the sense of Theorem~\ref{thm:separation}(a) under full-state, correct-model conditions; under partial observation it must be instantiated with measurement-robust variants \cite{lindemann2021robustocbf}, and Theorem~\ref{thm:impossibility} bounds what any instantiation can achieve. \emph{Shielding} \cite{alshiekh2018shielding}: a discrete-state instance of certified-set admission. The added generality of Theorem~\ref{thm:separation} over these classics is exactly the pair (arbitrary post-selecting generators, statistical partial observation)---the pair under which modern learned generators are actually operated.
\end{remark}

\section{Discussion}\label{sec:discussion}

\emph{What the theory licenses.} For learning-enabled loops the operational content of Theorem~\ref{thm:separation} is a change-management rule: the generator sits outside the safety case. The rule rests on the sufficiency half (Part (b)); the necessity half (Part (a)) is what forbids the alternative certification patterns of Section~\ref{sec:motivating}, and applies to admission soundness rather than to contract safety (Remark~\ref{rem:scope}). Retraining, fine-tuning, or wholesale replacement---including replacement by a generator that inspects the verifier and retries---requires no new safety argument, provided the certified triple (tube, admission computation, fallback) is untouched. Conversely, Example~\ref{ex:selection} is a warning about a certification pattern in current use: statistical per-candidate screening of sampled actions is not a gate, and an unadjusted per-candidate level does not remain a generator-level level under unrestricted selection.

\emph{Where the hard limits lie.} Theorem~\ref{thm:impossibility} places the unavoidable cost of partial observation in the admission layer: ambiguity between hypotheses that demand different decisions cannot be adjudicated at runtime, only priced ($\abar+\beta+\delta\ge1$), pruned at design time, or absorbed by common-feasible proposal sets. The practical consequence is a division of labor: estimation theory governs $\Gamma$, design-time verification governs $\eta$ and the pruning, and the admission layer, implemented deterministically, contributes nothing to the risk budget (Remark~\ref{rem:deterministic}).

\emph{Scope and companion work.} The results here are architectural; the single linear-Gaussian display \eqref{eq:gaussdelta} is illustrative. The quantitative questions this paper deliberately does not treat---how much observation information a given contract requires, the exact certifiability threshold for fixed excitation designs, the price of universal versus contract-specific design, and whether adaptive excitation can improve on the best fixed design---are the subject of a companion paper on the information geometry of admission \cite{companion}. The single-gate architecture is likewise a scope boundary rather than a technical convenience: Assumption~\ref{ass:nobypass} places one gate on the actuation path, and with several gates on distinct channels the information-set coupling identified in the decentralized supervisory-control setting \cite{barrett2000separation} reappears, so the uniform conditional bookkeeping of Definition~\ref{def:levels} would have to be redone. The present theory is also agnostic about how $\Av$ and the design-time certificates are produced; combining the ledger with a posteriori error estimates for the models used inside those certificates is a natural further step.


\end{document}